\newcommand{\uniformnormal}[2]{\ensuremath{\mathcal{U}(#1,#2)}}
\newcommand{\thetamodel}{{\phi}}
\newcommand{\yAll}{\mathbf{y}}
\newcommand{\yAllExpand}{\left\{y_1, \dots, y_T\right\}}
\newcommand{\uAll}{\mathbf{u}}
\newcommand{\uAllExpand}{\left\{u_1, \dots, u_T\right\}}
\newcommand{\xAll}{\mathbf{x}}
\newcommand{\xAllExpand}{\left\{x_1, \dots, x_{T+1}\right\}}
\newcommand{\Transp}{\mathsf{T}}
\newcommand{\pn}{v} 
\newcommand{\mn}{e} 
\newcommand{\pmat}[1]{\begin{bmatrix}#1 \end{bmatrix}}
\newcommand{\bmat}[1]{\begin{bmatrix}#1\end{bmatrix}}
\newtheorem{lemma}{Lemma}
\date{\today}
\title{Variational System Identification for Nonlinear\\ State-Space Models} 
\author[1]{Jarrad Courts\thanks{\url{jarrad.courts@uon.edu.au}}}
\author[1]{Adrian Wills\thanks{\url{Adrian.Wills@newcastle.edu.au}}}
\author[2]{Thomas B. Sch\"on\thanks{\url{thomas.schon@it.uu.se}}, \thanks{This research was financially supported by the Swedish Foundation for Strategic Research (SSF) via the project \emph{ASSEMBLE} (contract number: RIT15-0012), by the Swedish Research Council via the projects \emph{Deep probabilistic regression -- new models and learning algorithms} (contract number: 2021-04301) and  \emph{NewLEADS - New Directions in Learning Dynamical Systems} (contract number: 621-2016-06079) and by \emph{Kjell och M{\"a}rta Beijer Foundation}.}}
\author[1]{Brett Ninness\thanks{\url{brett.ninness@newcastle.edu.au}}}
\affil[1]{University of Newcastle, School of Engineering, Australia}        
\affil[2]{Department of Information Technology, Uppsala University, Uppsala, Sweden}
\begin{document}
		
	\maketitle
	

\begin{abstract}                          
  This paper considers parameter estimation for nonlinear state-space models, which is an important but challenging problem. We address this challenge by employing a variational inference (VI) approach, which is a principled method that has deep connections to maximum likelihood estimation. This VI approach ultimately provides estimates of the model as solutions to an optimisation problem, which is deterministic, tractable and can be solved using standard optimisation tools. A specialisation of this approach for systems with additive Gaussian noise is also detailed. The proposed method is examined numerically on a range of simulated and real examples focusing on the robustness to parameter initialisation; additionally, favourable comparisons are performed against state-of-the-art alternatives.
\end{abstract}


\section{Introduction}
	The problem of system identification is long-standing and has significant practical applications with a large body of related literature across many different fields \cite{Ljung1999,Isermann2011}. In many important applications, the underlying system exhibits nonlinear dynamic behaviour, which is assumed to be adequately captured by a time-indexed data record of system inputs and outputs. The system identification problem is then to obtain a suitable model that best explains this data. Such a model can then be employed as a surrogate for the actual system of interest for the purposes of prediction, control, decision making and analysis.
	
	Due to the enormous variety of systems, there is now a wide array of model classes. These range from linear models~\cite{Ljung1999}, to block-structured nonlinear models~\cite{Wills2013}, through to highly flexible nonlinear black-box models~\cite{Ljung2010}. This paper focuses on probabilistic nonlinear state-space models, a highly flexible class of models that relies on a so-called state, the time evolution of which models the system dynamics. It is important to highlight that the state is typically unknown and, as such, is sometimes called hidden or latent. It is also well recognised that the system identification problem for these models is important and difficult\cite{Ljung2010,Ninness2009}; the states being unknown is a significant source of this difficulty \citep{Schoen2015}.
	
	In this paper, it is assumed that the structure of the state-space model is known but that the associated model parameters are unknown and are to be estimated from the available data. There are many approaches available for estimating these parameters, but we focus on maximum likelihood (ML) parameter estimation. The ML estimate has desirable statistical properties \citep{Aastroem1980} and is commonly used in system identification \cite{Ljung1999,Ninness2009}. However, computing the ML estimate is generally intractable for nonlinear state-space models. This intractability leads to various approximations of the ML estimate; see, e.g. \cite{Schoen2011,Ljung1999,Ninness2009}. These approaches can be broadly grouped into two categories according to the strategy used to handle unknown states; the so-called `marginalisation' and `data augmentation' approaches \citep{Schoen2015}.
	
	Marginalisation approaches compute the likelihood recursively by marginalising the unknown state and then maximise the resultant likelihood directly over the model parameters \cite{Saerkkae2013,Schoen2011}. This likelihood calculation is generally intractable due to the required marginalisation step over an unknown state distribution. This fundamental difficulty has lead to various approximations; for example, \citep{Kokkala2015} and \citep{Schoen2011} consider using an unscented Kalman filter and a particle filter, respectively. While promising results using stochastic quasi-Newton optimisation are shown in \cite{Wills2021}, the marginalisation approach commonly has difficulties regarding undesirable local minima in the resulting optimisation problems leading to issues regarding robustness to the initial parameter estimate \cite{Ljung2006,Ljung2019,Ninness2009}.
	
	Data augmentation approaches treat the unknown state as an auxiliary variable that is estimated alongside the model parameters. A commonly used data augmentation method is the expectation maximisation (EM) approach \cite{Dempster1977}, which iterates between estimating the state and updating the parameters. Obtaining the state estimate for EM is also generally intractable, and instead, various approximations have been used \citep{Chitralekha2009}. Both particle (PSEM) \citep{Schoen2011} and assumed Gaussian \cite{Gasperin2011,Kokkala2014} approximations have been used within EM to deliver approximate ML estimates. Stochastic approximation EM (SAEM)  \citep{Delyon1999} offers improved performance over PSEM for nonlinear state-space models~\cite{Lindsten2013}, primarily due to more efficient iterations that use particles from previous iterations \citep{Schoen2015}.
	
	As an alternative to these methods, the primary contribution of this paper is the presentation of a variational inference (VI) \cite{Jordan1999,Blei2017,Beal2003,Beal2003a} based approach to approximate the ML parameter estimate for nonlinear state-space models. The developed approach is a data augmentation method that provides parameter estimates that \emph{approximate} the maximum likelihood solution. The provided parameter estimate is obtained by solving a single optimisation problem of a standard form with readily available exact first- and second-order derivatives. The presented VI approach jointly estimates the state and parameters and exhibits rapid convergence while remaining robust. This behaviour contrasts with EM methods, where robustness is often observed, but the convergence rate decreases as the parameter estimate approaches the ML estimate. 
	
	Variational methods have also been applied to identify models of different classes in several works; see, for example, \citep{Beal2003,Beal2003a,Risuleo2017,Lindfors2020}. However, these methods do not apply to nonlinear state-space models and, along with many other parameter estimation methods that use VI, are all variational-EM approaches \citep{Blei2017,Wong2020}. Compared to EM approaches, a \emph{key innovation} of the proposed approach in this paper is the simultaneous optimisation over both the state and the model parameters, which appears to offer \emph{significantly} improved convergence rates.

\section{Problem Formulation} \label{ML;sec:problem_formulation}
	The problem of system identification for nonlinear state-space models considered in this paper consists of using the input and output measurements \(\uAll \triangleq u_{1:T} \triangleq \uAllExpand\) and \(\yAll \triangleq y_{1:T} \triangleq \yAllExpand\), respectively, to compute an estimate of the model parameters \(\theta \in \mathcal{R}^{n_\theta}\) for the following model structure
	\begin{subequations} \label{ML:eq:general_model}
		\begin{align}
			x_{k+1} &= f_k\left(x_k,u_k,\pn_k,\theta\right), \\
			y_k &= h_k\left(x_k,u_k,\mn_k,\theta\right),
		\end{align}
	\end{subequations}
	where \(x_k \in \mathcal{R}^{n_x}\) is the state, \( y_k \in \mathcal{R}^{n_y}\) is the observed measurement, \(u_k \in \mathcal{R}^{n_u}\) is the measured input, and the functions \(f_k(\cdot)\) and \(h_k(\cdot)\) describe the process and measurement models, respectively. The process and measurement noise, \(\pn_k\) and \(\mn_k\), respectively, are random variables assumed to belong to distributions of a known form. Parameters of these distributions are included within $\theta$. For ease of exposition, we henceforth drop the explicit dependence on the inputs $\uAll$.
	
	Throughout this paper, we will frequently employ the alternative probabilistic representation of \eqref{ML:eq:general_model}, given by
	\begin{subequations} \label{ML:eq:general_probablisitc_model}
		\begin{align}
			x_{k+1} &\sim p_\theta(x_{k+1} \mid x_k), \\
			y_k &\sim p_\theta(y_k \mid x_k),
		\end{align}
	\end{subequations}
	where \(p\) denotes a probability density function.
	
	In this paper, the problem considered is approximating the maximum likelihood parameter estimate, given by
	\begin{align} \label{eq:ML estimate}
		\theta_{\text{ML}} &= \arg \max_{\theta} \quad \log p_{\theta}\left(\yAll \right).
	\end{align}
	A well-known fundamental difficulty in solving \eqref{eq:ML estimate} is that the log-likelihood function cannot be exactly computed, which extends to the computation of gradient information, and therefore optimisation is challenging. In the following section, we address this challenge by employing a so-called variational inference approach.


\section{Variational Nonlinear System Identification} \label{sec:VI_sys_ID}
	In this section, the use of variational inference applied to nonlinear state-space systems will be presented in Section~\ref{sec:variational inference}. The relationship between variational inference and expectation maximisation is then examined in Section~\ref{sec:link_to_EM}.
 
\subsection{Variational Inference}\label{sec:variational inference} 
	Variational inference is a widely used method where the primary aim is to approximate intractable distributions by a tractable parametric density of an assumed form. The use of the epithet `variational' stems from the idea that these methods rely on optimisation as the primary tool for choosing the member of the parametric assumed density that best matches the intractable distribution of interest. In this paper, we will employ the Kullback-Leibler (KL) divergence~\cite{Kullback1951} as the cost function for determining	optimality. For ease of reference, the KL divergence between two densities $p(z)$ and $q(z)$ is defined as
	\begin{align}
		\label{eq:kl_def}
		\text{KL}[ p(z) \mid \mid q(z)] \triangleq \int p(z) \log
		\frac{p(z)}{q(z)} \text{d}z.
	\end{align}
	In the context of this paper, we motivate the use of VI by noting that the log-likelihood can be expressed using KL divergence via
	\begin{subequations}
		\begin{align}
			\log p_\theta(\yAll) &= \int p_\theta(\xAll \mid \yAll) \log p_\theta(\yAll) \text{d} \xAll \\
								 &= -\int p_\theta(\xAll \mid \yAll) \log \frac{p_\theta(\xAll \mid \yAll)}{p_\theta(\xAll , \yAll)} \text{d} \xAll \\
								 &= -\text{KL}[ p_\theta(\xAll \mid \yAll)  \mid\mid p_\theta(\xAll , \yAll )],
		\end{align}
	\end{subequations}
	where \(\xAll \triangleq x_{1:T+1} \triangleq \xAllExpand\) and the first equality comes from the fact that \(\log p_\theta(\yAll)\) does not depend on $\xAll$ and is, therefore, invariant to expectation relative to any density in $\xAll$. The second equality stems from the fact that $-\log p_\theta(\yAll)= \log p_\theta(\xAll \mid \yAll) - \log p_\theta(\xAll, \yAll)$ and the third equality from the definition of KL divergence \eqref{eq:kl_def}. Therefore, the maximum likelihood problem \eqref{eq:ML estimate} can be equivalently stated as
	\begin{align}
		\label{eq:ml2}
		\theta_{\text{ML}} &= \arg \min_{\theta} \quad \text{KL}[ p_\theta(\xAll \mid \yAll)  \mid\mid p_\theta(\xAll , \yAll )].
	\end{align}
	Unfortunately, the smoothed state distribution $p_\theta(\xAll \mid \yAll)$ cannot be expressed in closed form, rendering the associated optimisation problem \eqref{eq:ml2} intractable.

	In light of this, we propose to replace the intractable smoothed distribution $p_\theta(\xAll \mid \yAll)$ with an $\eta$-parameterised distribution $q_\eta(\xAll)$, called the assumed density, and solve a similar problem
	\begin{align}
		\label{eq:vi_main}
		\theta^\star, \eta^\star &= \arg \min_{\theta,\eta} \quad \text{KL}[ q_\eta(\xAll )  \mid\mid p_\theta(\xAll , \yAll )].
	\end{align}
	Note that the notation introduced here for an assumed density is general and is used with other parameters than just \(\eta\), and for distributions over other variables than just \(\xAll\), within this paper.
	 
	A major benefit of this approach is that $q_\eta(\xAll)$ can be chosen in a convenient manner such that the resulting problem is tractable. This procedure of approximating an intractable density with an assumed density and minimising the KL divergence is known as variational inference (see, e.g. \cite{Blei2017}). The utility of this approach is then highly dependent on the choice of assumed density. In principle, this assumed density is selected to achieve two sometimes competing goals, 1) the assumed density family should be flexible enough to `closely' match $p_\theta(\xAll \mid \yAll)$, and 2) the assumed density $q_\eta(\xAll)$ should result in a tractable optimisation problem \eqref{eq:vi_main}.

	To further discuss this approach, it will be convenient to introduce a cost function
	\begin{align}
		\label{eq:vi_cost}
		\mathcal{L}(\eta, \theta) &= -\text{KL}[ q_\eta(\xAll )  \mid\mid p_\theta(\xAll , \yAll )],
	\end{align}
	so that \eqref{eq:vi_main} becomes $\theta^\star, \eta^\star = \arg \max_{\theta,\eta} \ \mathcal{L}(\eta,\theta)$. The following lemma establishes that the log-likelihood $\log p_\theta(\yAll)$ is bounded below by $\mathcal{L}(\eta,\theta)$, and the gap between the lower bound and log-likelihood is provided by $\text{KL}[ q_\eta(\xAll )  \mid\mid p_\theta(\xAll \mid \yAll )]$. Therefore, as the KL divergence between $ q_\eta(\xAll )$ and $p_\theta(\xAll \mid \yAll )$ diminishes, the lower bound becomes tight.
	\begin{lemma} \label{lem:lower bound}
		The log-likelihood can be expressed as
		\begin{align}
			\label{eq:ml_via_kl}
			\log p_\theta(\yAll) &= \mathcal{L}(\eta,\theta) + \textnormal{KL}[ q_\eta(\xAll )  \mid\mid p_\theta(\xAll \mid \yAll )],
		\end{align}
		and is bounded below according to
		\begin{align}
			\label{eq:nll_bound}
			\log p_\theta(\yAll) \geq \mathcal{L}(\eta,\theta).
		\end{align}
	\end{lemma}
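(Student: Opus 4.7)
The plan is to establish the decomposition \eqref{eq:ml_via_kl} directly from the chain rule of probability, and then obtain the bound \eqref{eq:nll_bound} as an immediate consequence of the non-negativity of the KL divergence. This mirrors the standard evidence-lower-bound (ELBO) derivation from the variational inference literature, and fits naturally with the manipulations already appearing just above the lemma statement.

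First I would write $p_\theta(\xAll,\yAll) = p_\theta(\xAll \mid \yAll)\, p_\theta(\yAll)$, take logarithms to get $\log p_\theta(\yAll) = \log p_\theta(\xAll,\yAll) - \log p_\theta(\xAll \mid \yAll)$, and then, since the left-hand side does not depend on $\xAll$, multiply both sides by the assumed density $q_\eta(\xAll)$ and integrate over $\xAll$. Using $\int q_\eta(\xAll)\,\text{d}\xAll = 1$, the left-hand side remains $\log p_\theta(\yAll)$. On the right-hand side I would add and subtract $\log q_\eta(\xAll)$ inside the integrand, which splits it into two KL-shaped integrals: one collects into $-\text{KL}[q_\eta(\xAll) \mid\mid p_\theta(\xAll,\yAll)]$, which equals $\mathcal{L}(\eta,\theta)$ by the definition \eqref{eq:vi_cost}, and the other collects into $\text{KL}[q_\eta(\xAll) \mid\mid p_\theta(\xAll \mid \yAll)]$. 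Together these yield \eqref{eq:ml_via_kl}.

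The bound \eqref{eq:nll_bound} then follows at once: by Jensen's inequality applied to the convex function $-\log$ (equivalently, Gibbs' inequality), the KL divergence is non-negative, so dropping the term $\text{KL}[q_\eta(\xAll) \mid\mid p_\theta(\xAll \mid \yAll)] \geq 0$ from \eqref{eq:ml_via_kl} gives $\log p_\theta(\yAll) \geq \mathcal{L}(\eta,\theta)$. There is no real obstacle in this proof; each step is a textbook manipulation. The only minor point worth noting is an implicit absolute-continuity assumption, namely that $q_\eta(\xAll)$ has support contained in that of $p_\theta(\xAll \mid \yAll)$, so that the KL divergence is well-defined, which is standard for sensibly chosen assumed densities.
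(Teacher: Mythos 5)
Your proposal is correct and follows essentially the same route as the paper's proof: conditional probability to write $\log p_\theta(\yAll) = \log p_\theta(\xAll,\yAll) - \log p_\theta(\xAll \mid \yAll)$, adding and subtracting $\log q_\eta(\xAll)$, taking the expectation with respect to $q_\eta(\xAll)$, identifying the two KL terms, and invoking non-negativity of the KL divergence for the bound. The remark about absolute continuity is a reasonable (if unstated in the paper) technical aside; nothing further is needed.
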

	\begin{proof}
		Using conditional probability
		\begin{align} \label{eq:log_cond_prob}
			\log p_\theta(\yAll) &= \log p_\theta(\xAll, \yAll) - \log p_\theta(\xAll \mid \yAll ).
		\end{align} 
		Through addition and subtraction of \( \log q_\eta(\xAll) \) to the right-hand side of \eqref{eq:log_cond_prob}, this leads to 
		\begin{align} \label{eq:LL_sum}
			\log p_\theta(\yAll) &= \log \frac{ p_\theta(\xAll, \yAll)  }{ q_\eta(\xAll) } + \log \frac{ q_\eta(\xAll) }{ p_\theta(\xAll \mid \yAll )}.
		\end{align}
		Taking expectation of both sides relative to $q_\eta(\xAll)$ delivers
		\begin{align}
			\log p_\theta(\yAll) = & -\int q_\eta(\xAll) \log \frac{q_\eta(\xAll)}{ p_\theta(\xAll, \yAll)} d\xAll \notag\\
								   &\quad +\int  q_\eta(\xAll) \log \frac{ q_\eta(\xAll) }{ p_\theta(\xAll \mid \yAll )} d\xAll,
		\end{align}
		which, from the definition of KL divergence, leads to the expression in \eqref{eq:ml_via_kl}.
		Since KL divergence is non-negative, then $\text{KL}[ q_\eta(\xAll )  \mid\mid p_\theta(\xAll \mid \yAll )] \geq 0$ and \eqref{eq:nll_bound} follows immediately from \eqref{eq:ml_via_kl}.
	\end{proof}
 

\subsection{Comparison to Expectation Maximisation} \label{sec:link_to_EM}
	There are close connections between solutions based on EM and VI; see, e.g. \cite{Neal1998} and \cite{Tzikas2008}. This section examines some of these similarities and differences in the context of system identification for nonlinear state-space models. Towards this end, the EM method can be summarised as iterating
 	\begin{align}
		\label{eq:emits}
		\theta_{k+1} = \arg \min_\theta \quad \text{KL}[ p_{\theta_k}(\xAll \mid \yAll)  \mid\mid p_\theta(\xAll , \yAll )].
	\end{align}
	It is interesting to compare this with the ML problem in \eqref{eq:ml2}, where the only difference for EM is that the smoothed density $p_{\theta_k}(\xAll \mid \yAll)$ is fixed at the $k$'th parameter values. Importantly, EM relies on the smoothed density, which is generally intractable. In theory, the EM approach generates parameters estimates \(\theta_{k}\) that monotonically increases \( \log p_{\theta_k}(\yAll)\) \cite{Cappe2007}.

	The following lemma reveals that EM can be viewed as block-ascent of the VI cost $\mathcal{L}(\eta,\theta)$ over $\eta$ and then $\theta$, respectively. This requires an assumption on $q_\eta(\xAll)$, which essentially means that it is possible to match the smoothed density $p_{\theta_k}\left(\xAll \mid \yAll\right)$.
	\begin{lemma} \label{lem:minKL}
		Assume there exists an \(\eta\) such that
		\begin{align}
			\label{eq:assminKL}
			q_{\eta}(\xAll) = p_{\theta_k}(\xAll \mid \yAll ).
		\end{align}
		Then the EM iterations can be expressed as
		\begin{subequations}
			\begin{align}
				\eta_k  &=  \arg\max_\eta \quad \mathcal{L}\left(\eta,\theta_k\right), \label{eq:viem1}\\
				\theta_{k+1} &= \arg\max_\theta \quad \mathcal{L}\left(\eta_k,\theta\right). \label{eq:viem2}
			\end{align}
		\end{subequations}
	\end{lemma}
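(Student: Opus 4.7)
The plan is to handle the two block-update equations separately, using Lemma~\ref{lem:lower bound} for the $\eta$-update and a direct substitution for the $\theta$-update.

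For \eqref{eq:viem1}, the first step is to invoke Lemma~\ref{lem:lower bound} with $\theta=\theta_k$ to write
\begin{align*}
\mathcal{L}(\eta,\theta_k) = \log p_{\theta_k}(\yAll) - \text{KL}[q_\eta(\xAll)\,\|\,p_{\theta_k}(\xAll\mid\yAll)].
\end{align*}
Since $\log p_{\theta_k}(\yAll)$ does not depend on $\eta$, maximising $\mathcal{L}(\eta,\theta_k)$ over $\eta$ is equivalent to minimising $\text{KL}[q_\eta(\xAll)\,\|\,p_{\theta_k}(\xAll\mid\yAll)]$. By non-negativity of the KL divergence, this is bounded below by zero, and under assumption \eqref{eq:assminKL} this bound is attained, so the maximiser $\eta_k$ satisfies $q_{\eta_k}(\xAll) = p_{\theta_k}(\xAll\mid\yAll)$.

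For \eqref{eq:viem2}, I would substitute the identity $q_{\eta_k}(\xAll) = p_{\theta_k}(\xAll\mid\yAll)$ from the previous step into the definition \eqref{eq:vi_cost} to obtain
\begin{align*}
\mathcal{L}(\eta_k,\theta) = -\text{KL}[p_{\theta_k}(\xAll\mid\yAll)\,\|\,p_\theta(\xAll,\yAll)].
\end{align*}
Maximising over $\theta$ is equivalent to minimising $\text{KL}[p_{\theta_k}(\xAll\mid\yAll)\,\|\,p_\theta(\xAll,\yAll)]$, which is exactly the EM update in \eqref{eq:emits}, establishing \eqref{eq:viem2}.

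The argument is almost entirely bookkeeping once Lemma~\ref{lem:lower bound} is in hand. The only subtle point, which will be the main thing to state carefully, is the role of the assumption \eqref{eq:assminKL}: without it, the $\eta$-maximiser would only drive $q_\eta$ as close as possible to $p_{\theta_k}(\xAll\mid\yAll)$ within the chosen family, and the equivalence with the exact EM iteration would break down. Thus the proof should be explicit that \eqref{eq:assminKL} is precisely what guarantees the KL term vanishes at $\eta_k$ and therefore allows the substitution used in the second step.
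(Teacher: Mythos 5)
Your proposal is correct and follows essentially the same route as the paper's own proof: it uses the decomposition from Lemma~\ref{lem:lower bound} to reduce the $\eta$-update to minimising $\text{KL}[q_\eta(\xAll)\,\|\,p_{\theta_k}(\xAll\mid\yAll)]$, invokes assumption \eqref{eq:assminKL} to conclude $q_{\eta_k}(\xAll)=p_{\theta_k}(\xAll\mid\yAll)$, and then identifies the $\theta$-update with the EM iteration \eqref{eq:emits}. Your closing remark on the role of \eqref{eq:assminKL} matches the paper's own discussion following the lemma.
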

	\begin{proof}
		According to \eqref{eq:ml_via_kl}, we can state \eqref{eq:viem1} as
		\begin{align*}
			\eta_k &= \arg\max_\eta \quad  \log p_{\theta_k}(\yAll) - \text{KL}[ q_\eta(\xAll)  \mid\mid p_{\theta_k}(\xAll \mid \yAll )].
		\end{align*}
		Notice that $\log p_{\theta_k}(\yAll)$ does not depend on $\eta$ and, therefore, the above problem becomes
		\begin{align}
    	    \label{eq:em_new_prob}
            \eta_k &= \arg\min_\eta \quad  \text{KL}[ q_\eta(\xAll)  \mid\mid p_{\theta_k}(\xAll \mid \yAll )].
		\end{align}
		Under assumption \eqref{eq:assminKL}, $\text{KL}[ q_\eta(\xAll)  \mid\mid p_{\theta_k}(\xAll \mid \yAll )]$ has a global minimum value of zero, which is achieved for any \(\eta_k\) that solves \eqref{eq:em_new_prob}. Therefore, $q_{\eta_k}(\xAll) = p_{\theta_k}(\xAll \mid \yAll )$, and \eqref{eq:viem2} coincides with \eqref{eq:emits}.
	\end{proof}

	It is important to recognise that, generally, it is not possible to select a \emph{tractable} $q_{\eta}(\xAll)$ such that \( q_{\eta}(\xAll) = p_{\theta_k}(\xAll \mid \yAll )\) for some $\eta$. Similarly, exactly computing the expectations with respect to the smoothed distribution $p_{\theta_k}\left(\xAll \mid \yAll\right)$ is generally not possible for nonlinear state-space models. As such, exactly performing EM on nonlinear state-space models is generally intractable, which removes the guaranteed non-decrease of the log-likelihood. So-called variational EM is obtained from iterating \eqref{eq:viem1}--\eqref{eq:viem2}, where the assumed density $q_\eta(\xAll)$ will not generally match \(p_{\theta_k}(\xAll \mid \yAll )\) \cite{Blei2017,Tzikas2008}. This approach ensures a monotonic sequence of lower bounds to the log-likelihood.

	Relative to these approaches, a major advantage of the proposed VI approach in \eqref{eq:vi_main} is that both $\eta$ and $\theta$ are jointly optimised; typically, this offers improved convergence rates compared with coordinate descent methods \cite{Nocedal2006a}. While the simulations in Section~\ref{sec:examples} confirm this improved convergence rate, we recognise the limitations of drawing general conclusions from these results.


\section{Assumed Gaussian Distribution} \label{sec:assumed_distrbution_and_tractable_approx}
	As mentioned in Section~\ref{sec:VI_sys_ID}, there are two competing goals when choosing the assumed density $q_\eta(\xAll)$, namely that it should be flexible enough to closely match $p_\theta(\xAll \mid\yAll)$ and that it should result in a manageable optimisation problem \eqref{eq:vi_main}. 

	In light of this, we propose $q_\eta(\xAll)$ to be a multivariate Normal (MVN) distribution 
	\begin{align}
          \label{eq:full_q_aw}
		q_\eta(\xAll) = \mathcal{N} \left ( \xAll \mid \mu, \Sigma \right ), \qquad \eta \triangleq \left(\mu, \Sigma \right),
	\end{align}
	where for clarity of exposition, the mean and covariance have the following structure
	\begin{align}
		\label{eq:mu_Sigma}
        \mu = \bmat{\mu_1\\ \mu_2 \\ \vdots \\ \mu_{T+1}}, \ 
        \Sigma = \bmat{
					\Sigma_{1,1}			& \Sigma_{1,2} 	 			& \cdots & \Sigma_{1,T+1} 		\\
					\Sigma_{1,2}^\Transp 	& \Sigma_{2,2} 	 			& \cdots & \Sigma_{2,T+1}	 	\\
					\vdots 					& 							& \ddots & \vdots 			  	\\ 
					\Sigma_{1,T+1}^\Transp 	& \Sigma_{2,T+1}^\Transp 	& \cdots & \Sigma_{T+1,T+1}      },
	\end{align}
	where each $\mu_k \in \mathbb{R}^{n_x}$ and $\Sigma_{k,j} \in \mathbb{R}^{n_x \times n_x}$. It is difficult to comment on whether or not this proposal will achieve the first goal since $p_\theta(\xAll \mid \yAll)$ is unknown in general. Nevertheless, this assumption aligns with other approaches where the smoothed distribution is assumed to be MVN \cite{Saerkkae2013}.  Note that this assumption on the form of the underlying state \emph{may} limit the performance of the overall system identification approach on some systems; examples may include multi-modal systems. Importantly, the MVN assumption leads to a tractable and deterministic optimisation problem \eqref{eq:vi_main}. This section discusses the relevant details of this assumption and presents a different parameterisation of $\eta$ with a dimension significantly less than that of $\left(\mu, \Sigma \right)$.
	

\subsection{Assumed Density $q_\eta(\xAll)$ Details}
	In this section, we show that it is not necessary to compute the full joint state density $q_\eta(\xAll)$ to compute \(\theta^\star\) under the MVN assumption. Towards this, we begin by splitting $\eta$ into two parts. First, $\alpha$ containing all the mean values and the block-tridiagonal covariance matrices from \eqref{eq:mu_Sigma}, and second, $\gamma$ containing all the remaining non-block-tridiagonal covariance matrices from \eqref{eq:mu_Sigma}. 
	\begin{subequations}
		\label{eq:betas}
		\begin{align}
			\alpha &\triangleq \left ( \mu, \left ( \Sigma_{k,k} \right)_{k=1}^{T+1}, \left ( \Sigma_{k,k+1} \right)_{k=1}^{T}\right ),\label{eq:beta1}\\
			\gamma &\triangleq \left ( \left (\Sigma_{k,j}\right)_{j=k+2}^{T+1}\right)_{k=1}^{T-1}. \label{eq:beta2}
		\end{align}
	\end{subequations}
	The following lemma shows that $\gamma$ does not affect the optimal $\alpha$ parameters. This result means that $\gamma$ can be removed from the problem without impacting \(\theta^\star\). The benefit is that the dimension of $\alpha$ is significantly smaller than $\eta$.
	
	\begin{lemma}\label{lem:beta1}
		Assume that $\eta = ( \alpha, \gamma )$ according to \eqref{eq:betas} and let
		\begin{align}
			\label{eq:vi_red}
			\theta^\star, \alpha^\star    &= \arg \max_{\theta,\alpha} \quad \mathcal{L}_R\left(\alpha, \theta\right),
		\end{align}
		where $\mathcal{L}_R\left(\alpha, \theta \right)$ is defined as
		\begin{align}
			\mathcal{L}_R\left(\alpha, \theta\right) = I_1(\alpha) + I_2(\alpha,\theta) - I_3(\alpha),
		\end{align}
		and
		\begin{align*}
			I_1\left(\alpha\right) &= \int q_{\alpha}(x_1) \log p(x_1)  dx_1, \\
			I_{2}\left(\alpha,\theta\right)  &= \sum_{k=1}^{T} \int q_{\alpha} (x_{k:k+1}) \log p_\theta(x_{k+1}, y_k \mid x_k)  dx_{k:k+1}, \\
			I_3\left(\alpha\right)  &= \sum_{k=1}^{T}\int q_{\alpha}(x_{k:k+1}) \log q_{\alpha} (x_{k:k+1}) dx_{k:k+1} \\
									&\quad - \sum_{k=2}^{T} \int q_{\alpha} (x_{k}) \log q_{\alpha} (x_{k}) dx_{k}.
		\end{align*}
		Then 
		\begin{align} \label{eq:vi_full}
			\theta^\star, \alpha^\star, \gamma^\star &= \arg \max_{\theta,\alpha,\gamma} \quad \mathcal{L}\left((\alpha,\gamma), \theta\right).
		\end{align}
	\end{lemma}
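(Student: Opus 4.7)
The plan is to show that for any fixed $(\alpha,\theta)$, the partial maximisation satisfies $\max_\gamma \mathcal{L}((\alpha,\gamma),\theta) = \mathcal{L}_R(\alpha,\theta)$. The lemma then follows by interchanging the order of optimisation, since $\max_{\alpha,\gamma,\theta}\mathcal{L}((\alpha,\gamma),\theta) = \max_{\alpha,\theta}\max_\gamma\mathcal{L}((\alpha,\gamma),\theta) = \max_{\alpha,\theta}\mathcal{L}_R(\alpha,\theta)$, so the two formulations yield identical optimisers in $(\theta,\alpha)$, with $\gamma^\star$ determined implicitly by the inner maximisation.

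I would begin by writing $\mathcal{L}(\eta,\theta) = \int q_\eta(\xAll)\log p_\theta(\xAll,\yAll)\,d\xAll - \int q_\eta(\xAll)\log q_\eta(\xAll)\,d\xAll$ and exploiting the state-space factorisation $p_\theta(\xAll,\yAll) = p(x_1)\prod_{k=1}^{T} p_\theta(x_{k+1}, y_k \mid x_k)$ to reduce the expected log-joint to $\int q_\eta(x_1)\log p(x_1)\,dx_1 + \sum_{k=1}^{T}\int q_\eta(x_{k:k+1})\log p_\theta(x_{k+1}, y_k\mid x_k)\,dx_{k:k+1}$. The key observation is that these integrands involve $q_\eta$ only through the marginals $q_\eta(x_1)$ and $q_\eta(x_{k:k+1})$, which for a multivariate Normal are determined solely by the mean entries and the diagonal and first super-diagonal covariance blocks—that is, by $\alpha$ alone, not by $\gamma$. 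Hence this contribution equals precisely $I_1(\alpha) + I_2(\alpha,\theta)$.

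Consequently $\gamma$ enters $\mathcal{L}$ only through the entropy $-\int q_\eta\log q_\eta = \tfrac{1}{2}\log\det(2\pi e\,\Sigma)$. Maximising $\log\det\Sigma$ over $\gamma$ with the block-tridiagonal entries held fixed is a classical maximum-entropy problem; stationarity via $\partial\log\det\Sigma/\partial\Sigma = \Sigma^{-1}$ forces the off-tridiagonal blocks of $\Sigma^{-1}$ to vanish, which characterises the unique positive-definite Gauss--Markov chain compatible with the prescribed pairwise marginals. For this optimal $\gamma^\star$ the joint density factorises as $q_\eta(\xAll) = \prod_{k=1}^{T} q_\eta(x_{k:k+1}) \big/ \prod_{k=2}^{T} q_\eta(x_k)$, so $\log q_\eta(\xAll) = \sum_{k=1}^{T}\log q_\eta(x_{k:k+1}) - \sum_{k=2}^{T}\log q_\eta(x_k)$ telescopes, and taking expectation under $q_\eta$ gives $I_3(\alpha)$. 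Combining the pieces yields $\max_\gamma\mathcal{L}((\alpha,\gamma),\theta) = I_1(\alpha) + I_2(\alpha,\theta) - I_3(\alpha) = \mathcal{L}_R(\alpha,\theta)$, as required.

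The main obstacle is the maximum-entropy step. While it is a standard fact from the theory of Gaussian graphical models, some care is needed to verify that the Gauss--Markov completion induced by the given block-tridiagonal data is positive-definite and thus admissible; everything else reduces to the Markov factorisation of $q_\eta$ and routine book-keeping of the marginal integrals.
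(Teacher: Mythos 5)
Your proposal is correct and follows essentially the same route as the paper: split $\mathcal{L}\left((\alpha,\gamma),\theta\right)$ into the expected log-joint, which by the Markov structure of $p_\theta(\xAll,\yAll)$ depends only on the pairwise marginals and hence only on $\alpha$, plus the entropy, and observe that the optimal $\gamma$ induces the Markov factorisation of $q_\eta(\xAll)$, under which the negative entropy is exactly $I_3(\alpha)$. The only difference is that where the paper delegates the key factorisation result $\gamma^\star = g(\alpha^\star)$ to \cite{Courts2020}, you establish it directly via the maximum-determinant (maximum-entropy) completion argument for the block-tridiagonal pattern, which makes the argument self-contained at the cost of having to verify positive-definiteness of the Gauss--Markov completion.
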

	\begin{proof}				
		See Appendix~\ref{app:proof of reduced vi lemma}.
	\end{proof}
	Therefore, since we are ultimately interested in $\theta^\star$, it suffices to solve \eqref{eq:vi_red}, which has the major benefit of involving a much-reduced parameter space. To solve \eqref{eq:vi_red} using standard gradient-based optimisation methods, it is crucial $\mathcal{L}_R\left(\theta, \alpha\right)$ can be evaluated, along with its gradient and possibly Hessian. For $I_1$ (with a suitable choice for the prior) and $I_3$, these calculations are relatively straightforward since the required expectations have known closed-form solutions. At the same time, these calculations are challenging for $I_2$ since they cannot be computed in closed-form in general.	
	
	To ameliorate this, here we introduce a parameterisation of the joint state $(x_k,x_{k+1})$ distribution that leads to both computationally efficient approximations and straightforward computation of the first- and second-order derivatives. To this end, let $\beta_k$ be defined as the set of parameters
	\begin{align}
		\beta_k =  ( \mu_k, \bar{\mu}_k, A_k, B_k, C_k ),
	\end{align}
	where \( A_k, B_k, C_k \in \mathcal{R}^{n_x \times n_x} \) and \( A_k, C_k \) are upper triangular. Then we can parameterise the assumed joint state $(x_k,x_{k+1})$ density as
	\begin{align}
		\label{eq:q_beta_k}
		q_{\beta_k}\left(x_k,x_{k+1}\right) = \mathcal{N}\left(	\begin{bmatrix}
																	x_k \\
																	x_{k+1}
																\end{bmatrix} ; 
																\begin{bmatrix}
																	\mu_k \\ 
																	\bar{\mu}_k
																\end{bmatrix},
																P_k^{\Transp} P_k \right),
	\end{align}
	where $P_k$ is an upper triangular Cholesky factor
	\begin{align}
		P_k = \begin{bmatrix}
					A_k & B_k \\
					0   & C_k
				\end{bmatrix}.
	\end{align}
	We can collect all the $\beta_k$'s into the tuple
	\begin{align}
		\label{eq:beta_def}
		\beta \triangleq (\beta_1, \beta_2,\ldots,\beta_T ).
	\end{align}

	Note that this formulation is over-parameterised and it will lead to inconsistent results since the marginal distribution for $x_k$ can be computed from either \( q_{\beta_{k-1}}\left(x_{k-1},x_{k}\right) \) or \(q_{\beta_k}\left(x_k,x_{k+1}\right) \). The following lemma introduces a constraint set $\Omega$ that eliminates the unnecessary degrees of freedom in $\beta$.
	\begin{lemma}\label{lem:same_set}
		Assume that $\beta$ is defined by \eqref{eq:beta_def}. Let 
		\begin{align*}
			\Omega  &\triangleq \{\beta \mid	B_k^\Transp B_k + C_k^\Transp C_k = A_{k+1}^\Transp A_{k+1}, \notag \\
					&\qquad \qquad \mu_{k+1} = \bar{\mu}_k, \quad k = 1, \dots, T-1 \}. 
		\end{align*}
		Then, for $\beta \in \Omega$, it follows that 
        \begin{align}
			\label{eq:qa_eq_qb}
			q_\alpha(x_{k:k+1}) = q_{\beta_k}(x_{k:k+1}),
		\end{align}
		whenever 
		\begin{align}
			\label{eq:sig_cond}
			\Sigma_{k,k} = A_k^\Transp A_k, \qquad \Sigma_{k,k+1} = A_k^\Transp B_k.
		\end{align}
	\end{lemma}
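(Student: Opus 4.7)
The plan is to reduce the claim to a direct entry-by-entry matching of two bivariate Gaussian densities on $\mathbb{R}^{2 n_x}$, namely the marginal $q_\alpha(x_{k:k+1})$ and the parameterised joint $q_{\beta_k}(x_k,x_{k+1})$ in \eqref{eq:q_beta_k}. Since both are Gaussian, it suffices to show their mean vectors agree and their covariance matrices agree under the stated conditions.

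First I would compute the covariance of $q_{\beta_k}$ explicitly. Multiplying out gives
\begin{align*}
P_k^{\Transp} P_k = \bmat{A_k^{\Transp} A_k & A_k^{\Transp} B_k \\ B_k^{\Transp} A_k & B_k^{\Transp} B_k + C_k^{\Transp} C_k},
\end{align*}
so the joint $q_{\beta_k}(x_k, x_{k+1})$ has mean $(\mu_k^{\Transp}, \bar{\mu}_k^{\Transp})^{\Transp}$ and the above covariance. On the other hand, because $q_\alpha(\xAll)$ is the MVN specified by $(\mu, \Sigma)$ in \eqref{eq:full_q_aw}--\eqref{eq:mu_Sigma}, its marginal on $(x_k, x_{k+1})$ is Gaussian with mean $(\mu_k^{\Transp}, \mu_{k+1}^{\Transp})^{\Transp}$ and covariance
\begin{align*}
\bmat{\Sigma_{k,k} & \Sigma_{k,k+1} \\ \Sigma_{k,k+1}^{\Transp} & \Sigma_{k+1,k+1}}.
\end{align*}

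Next I would match the pieces. The top-left and upper off-diagonal covariance blocks agree immediately from \eqref{eq:sig_cond}: $\Sigma_{k,k} = A_k^{\Transp} A_k$ and $\Sigma_{k,k+1} = A_k^{\Transp} B_k$. For the bottom-right block I would apply \eqref{eq:sig_cond} at the index $k+1$ to obtain $\Sigma_{k+1,k+1} = A_{k+1}^{\Transp} A_{k+1}$, and then invoke the constraint in $\Omega$, which states $A_{k+1}^{\Transp} A_{k+1} = B_k^{\Transp} B_k + C_k^{\Transp} C_k$, to conclude $\Sigma_{k+1,k+1} = B_k^{\Transp} B_k + C_k^{\Transp} C_k$. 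For the means, the $\Omega$ constraint $\mu_{k+1} = \bar{\mu}_k$ supplies the remaining match. Hence the two bivariate Gaussians are identical, which yields \eqref{eq:qa_eq_qb}.

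I do not expect a serious obstacle: the argument is just an identification of coefficients between a block Cholesky factorisation and the corresponding covariance blocks, exploiting the overlap conditions built into $\Omega$ to stitch consecutive pairs consistently. The only mildly delicate point is recognising that the bottom-right block of $q_{\beta_k}$ is forced to coincide with the top-left block of $q_{\beta_{k+1}}$ via the $\Omega$ constraint, which is precisely what prevents the marginal on $x_{k+1}$ computed from one factor from disagreeing with that computed from the next.
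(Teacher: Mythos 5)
Your proof is correct and follows essentially the same route as the paper's: write out $q_\alpha(x_{k:k+1})$ and $q_{\beta_k}(x_{k:k+1})$ as Gaussians, expand $P_k^\Transp P_k$ into its blocks, and match means and covariance blocks using \eqref{eq:sig_cond} together with the $\Omega$ constraints. Your explicit note that the bottom-right block requires \eqref{eq:sig_cond} applied at index $k+1$ is a point the paper leaves implicit, but it is the same argument.
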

	\begin{proof}
		According to \eqref{eq:full_q_aw}, note that $q_\alpha(x_{k:k+1})$ is given by
		\begin{align}
			\mathcal{N}\left( 	\begin{bmatrix}
									x_k \\
									x_{k+1}
								\end{bmatrix} ; 
								\begin{bmatrix}
									\mu_k \\ 
									\mu_{k+1}
								\end{bmatrix},
								\bmat{\Sigma_{k,k}  & \Sigma_{k,k+1}\\ \Sigma_{k+1,k} & \Sigma_{k+1,k+1}}   \right).
		\end{align}
        From \eqref{eq:q_beta_k}, we note that $q_{\beta_k}(x_{k:k+1})$ is
        \begin{align}
			\mathcal{N}\left(	\begin{bmatrix}
									x_k \\
									x_{k+1}
								\end{bmatrix} ; 
								\begin{bmatrix}
									\mu_k \\ 
									\bar{\mu}_{k}
								\end{bmatrix},
								\bmat{	A_k^\Transp A_k & A_k^\Transp B_k \\
          								B_k^\Transp A_k & B_k^\Transp B_k + C_k^\Transp C_k} \right ).
        \end{align}
        If $\beta \in \Omega$, then $\bar{\mu}_k=\mu_{k+1}$ and $B_k^\Transp B_k + C_k^\Transp C_k = A_{k+1}^\Transp A_{k+1}$, and therefore \eqref{eq:qa_eq_qb} follows from \eqref{eq:sig_cond}.
	\end{proof}
	The next section makes use of this parameterisation to approximate the required expectation in $I_2$ in a computationally efficient manner.
	

\subsection{VI Approximation}
	As mentioned in the previous section, it is vital for gradient-based optimisation that the expectation in $I_2$ can be approximated, along with its first- and second-order derivatives. The key difficulty in calculating $I_2$ is the computation of $\int q_{\alpha} (x_{k:k+1}) \log p_\theta(x_{k+1}, y_k \mid x_k)  dx_{k:k+1}$. Employing Lemma~\ref{lem:same_set}, we can write this integral in terms of \(\beta_k\), instead of \(\alpha\), as
	\begin{align*}
		E_k(\beta_k,\theta) \triangleq \int q_{\beta_k} (x_{k:k+1}) \log p_\theta(x_{k+1}, y_k \mid x_k)  dx_{k:k+1}.
	\end{align*}
	Importantly, the choice in parameterisation affords a straightforward approximation of $E_k(\beta_k,\theta)$, denoted as $\hat{E}_k(\beta_k,\theta)$, using Gaussian quadrature (see, e.g. \cite{Julier1997,Wan2000}) via
	\begin{align} \label{eq:approx_integral_E}
		\hat{E}_k(\beta_k,\theta) &\triangleq \sum_{j = 1}^{n_s} w_j  \log p_\theta(\bar{x}_{k+1}^j, y_k \mid x_k^j).
	\end{align}
	In the above, $w_j$ are the so-called weights, and $x_k^j$ and $\bar{x}_{k+1}^j$ are the so-called sigma points. Importantly, the weights are predefined constants, and the sigma points are defined as
	\begin{align}
	  \bmat{x_k^j\\ \bar{x}_{k+1}^j} = \bmat{\mu_k 			& A_k^\Transp & 0 \\
	  										\bar{\mu}_{k} 	& B_k^\Transp & C_k^\Transp}  a_j,
	\end{align}
	where the vectors $a_j$ are also constant. Both the weights $w_j$ and vectors $a_j$ depend on the choice of Gaussian quadrature, but they are nevertheless constant for this choice. For further details on specific quadrature methods and the accuracy of different methods in a similar context, see, for example, \cite{Jia2013,Saerkkae2014,Kokkala2015}. Furthermore, the sigma points being linear combinations of the elements of \(\beta_k\) is critical as it significantly simplifies the calculation of both first- and second-order derivatives used in the optimisation.

	Using this approximation, we can define a tractable approximation to problem \eqref{eq:vi_red} via
	\begin{align}
		\label{eq:approx_optim}
		\hat{\beta}, \hat{\theta} = \arg\max_{\beta,\theta}	\quad  &\hat{\mathcal{L}}_R(\beta,\theta), \quad \text{s.t.} \quad \beta \in \Omega,
	\end{align}
	where
	\begin{align}
		\hat{\mathcal{L}}_R\left(\beta,\theta\right) &= I_1\left(\beta\right) + \hat{I}_2(\beta,\theta) - I_3\left(\beta\right),
	\end{align}
	and 
	\begin{align}
	  \label{eq:approx_integral}
	  \hat{I}_2(\beta,\theta) = \sum_{k=1}^{T} \hat{E}_k(\beta_k,\theta).
	\end{align}
	This constrained optimisation problem is of standard form and can be solved using exact first- and second-order derivatives without any further approximations.

	The assumptions contained within \eqref{eq:approx_optim} are summarised as follows: first, an MVN distribution of the state is assumed. Second, \( \log p_\theta(\bar{x}_{k+1}^j, y_k \mid x_k^j) \) can be evaluated and it is twice continuously differentiable, and third, the use of quadrature to approximate the intractable expectations.
	

\subsection{Additive Gaussian Noise} \label{sec:additive noise} 
	In this section, we consider a simplification for nonlinear systems with additive Gaussian noise. This simplification enables a reduction in the number of optimisation variables, brings numerical benefits, and enables an effective approximation of the Hessian to be formed utilising only first-order derivatives.  We consider a model structure of the form
	\begin{align}
	  \label{eq:NLmodel}
	  \bmat{x_{k+1}\\y_k} &= \bmat{ f(x_k,\thetamodel)\\ h(x_k,\thetamodel)} + \bmat{\pn_k\\ \mn_k}, \quad \bmat{\pn_k\\ \mn_k} \sim \mathcal{N}(0,\Pi),
	\end{align}
	where $\theta = \left( \thetamodel, \Pi \right)$ includes model parameters $\thetamodel$ and noise covariance \(\Pi\). As detailed in the following Lemma~\ref{lem:additive_noise}, the structure of this system allows for system identification to be performed using a more structured, reduced size optimisation problem.

	\begin{lemma} \label{lem:additive_noise}
		For systems in the form of \eqref{eq:NLmodel}, identification can be performed by solving the reduced problem
		\begin{align}
			\label{eq:additive ID}
			\hat{\beta}, \hat{\phi} = \arg \max_{\thetamodel,\beta} \quad &\hat{\mathcal{L}}_{\textnormal{AG}}\left(\thetamodel,\beta\right), \quad \textnormal{s.t.} \quad \beta \in \Omega,
		\end{align}
		where
		\begin{subequations}
			\begin{align}
			  \hat{\mathcal{L}}_{\textnormal{AG}}\left(\thetamodel,\beta \right) &= I_1(\beta) + \hat{I}_{2}^{\textnormal{AG}}(\phi, \beta) - I_3(\beta), \\
			  \hat{I}_{2}^{\textnormal{AG}}(\phi,\beta) 						 &= c + \frac{T}{2}\log | \hat{\Pi}(\phi,\beta)|, \label{eq:additive_I23} \\
			  \hat{\Pi}\left(\thetamodel,\beta\right) &= \frac{1}{T}\sum_{k=1}^{T} \sum_{j=1}^{n_s} w_j \xi^j_k \left(\xi^j_k\right)^\Transp,  \label{eq:sample Pi} \\
			  \xi_k^j &=  \begin{bmatrix}
						  	\bar{x}^j_{k+1} - f({x}_k^j,\phi ) \\
						  	y_k -  h({x}^j_k,\phi)
						  \end{bmatrix}, \label{eq:1}
			\end{align}
		\end{subequations}
		and $c$ is a constant that does not depend on $\phi$ or $\beta$.
	\end{lemma}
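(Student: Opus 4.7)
The plan is to substitute the explicit Gaussian form of the one-step conditional into the quadrature approximation of $\hat{I}_2$ and then analytically profile out the noise covariance $\Pi$, exploiting the fact that $\Pi$ appears nowhere else in the cost. For the additive-noise model \eqref{eq:NLmodel}, the required log-conditional has the closed form
\begin{align*}
\log p_\theta(x_{k+1}, y_k \mid x_k) = -\tfrac{n_x+n_y}{2}\log(2\pi) - \tfrac{1}{2}\log|\Pi| - \tfrac{1}{2}\xi^\Transp \Pi^{-1}\xi,
\end{align*}
with $\xi$ the stacked residual vector defined in \eqref{eq:1}. First I would insert this into \eqref{eq:approx_integral_E}, evaluate at the sigma points $(x_k^j,\bar{x}_{k+1}^j)$, and sum over $k$ to express $\hat{I}_2(\phi,\Pi,\beta)$ explicitly.

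Next, using the trace identity $a^\Transp M a = \trace(M a a^\Transp)$ together with the definition of $\hat{\Pi}$ in \eqref{eq:sample Pi} and the standard quadrature convention $\sum_j w_j = 1$, the quadratic contribution collapses to $\tfrac{T}{2}\trace(\Pi^{-1}\hat{\Pi}(\phi,\beta))$, so $\hat{I}_2$ depends on $\Pi$ only through $\log|\Pi|$ and $\trace(\Pi^{-1}\hat{\Pi})$. Since $\Pi$ does not appear in $I_1(\beta)$ or $I_3(\beta)$, the outer maximisation over $\Pi$ decouples from that over $(\phi,\beta)$ and can be carried out in closed form. Applying the matrix identities $\partial_{\Pi}\log|\Pi|=\Pi^{-\Transp}$ and $\partial_{\Pi}\trace(\Pi^{-1}S)=-\Pi^{-\Transp}S\Pi^{-\Transp}$ to the first-order condition yields the inner maximiser $\Pi^\star(\phi,\beta)=\hat{\Pi}(\phi,\beta)$. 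Substituting this back collapses the trace term to the constant $\tfrac{T(n_x+n_y)}{2}$ and leaves only the log-determinant term, which produces \eqref{eq:additive_I23} after absorbing all $(\phi,\beta)$-independent pieces into $c$. The outer problem over $(\phi,\beta)$ subject to $\beta\in\Omega$ is then precisely \eqref{eq:additive ID}.

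The main obstacle I anticipate is justifying the profiling step cleanly: one must verify that $\hat{\Pi}(\phi,\beta)$ is positive definite (so that $\log|\hat{\Pi}|$ is well defined and the inverse used in the first-order condition exists) and that the stationary point is indeed a global maximiser of the inner $\Pi$-problem rather than a saddle. Positive definiteness holds generically because $\hat{\Pi}$ is a nonnegatively-weighted sum of rank-one outer products of the sigma-point residuals, and the global-maximiser property follows from the well-known concavity of the Gaussian log-likelihood in $\Pi^{-1}$ when the empirical scatter matrix is positive definite. Beyond this, the derivation is a direct calculation of the kind familiar from concentrated maximum-likelihood arguments.
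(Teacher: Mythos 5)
Your proposal follows essentially the same route as the paper's proof: substitute the additive-Gaussian log-density into the quadrature approximation of $\hat{I}_2$, rewrite the quadratic terms via the trace/cyclic-permutation identity, concentrate out $\Pi$ through the first-order condition to obtain $\Pi^\star(\phi,\beta)=\hat{\Pi}(\phi,\beta)$, and substitute back to arrive at \eqref{eq:additive_I23} with the remaining terms absorbed into $c$. Your additional checks (positive definiteness of $\hat{\Pi}$ and global optimality of the concentrated stationary point) are sound refinements that the paper leaves implicit, but they do not change the argument.
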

	\begin{proof}
		Using \eqref{eq:approx_integral} and \eqref{eq:NLmodel}, we can evaluate $\hat{I}_{2}$ as
		\begin{align}	\label{eq:I_23 additive}
			\hat{I}_{2} &= \frac{T}{2}\log |2\pi \Pi| + \frac{1}{2}\sum_{k=1}^{T} \sum_{j=1}^{n_s} w_j (\xi_k^j)^\Transp \Pi^{-1} (\xi_k^j).
		\end{align}
		We note that $(\xi_k^j)^\Transp \Pi^{-1} (\xi_k^j) = \text{tr} \{ (\xi_k^j)^\Transp \Pi^{-1} (\xi_k^j)\} = \text{tr} \{\Pi^{-1}	(\xi_k^j) (\xi_k^j)^\Transp \} $ since the trace operator is invariant to cyclic permutations. Due to the fact that the trace is also a linear operator, we have that 
		\begin{align*}	
			\hat{I}_{2} &= \frac{T}{2}\log |2\pi \Pi| + \text{tr} \left\{ \Pi^{-1} \frac{1}{2}\sum_{k=1}^{T} \sum_{j=1}^{n_s} w_j (\xi_k^j)^\Transp (\xi_k^j) \right \}.
		\end{align*}
		First-order necessary conditions of optimality require that $\partial \hat{I}_{2}\left(\beta,\theta\right) / \partial \Pi = 0$, which occurs  when \\$\Pi = \frac{1}{T}\sum_{k=1}^{T} \sum_{j=1}^{n_s} w_j \xi^j_k \left(\xi^j_k\right)^\Transp$, hence \eqref{eq:sample Pi}. Substituting \eqref{eq:sample Pi} into \eqref{eq:I_23 additive} yields \eqref{eq:additive_I23}, which completes the proof.
	\end{proof}
	Note that the final estimate of $\Pi$ is subsequently given by evaluating \(\hat{\Pi}\left(\hat{\thetamodel},\hat{\beta}\right)\) using \eqref{eq:sample Pi}. Further, decoupled noise can be accommodated by constraining the off-diagonal block of \(\Pi\) to zero.

	This specialisation possesses several benefits compared with a general approach. First, the number of variables required to be optimised is reduced while still addressing the original underlying problem. Second, using only the Jacobian of \(f(x_k, \thetamodel)\) and \(h(x_k, \thetamodel)\), the exact gradient and a good approximation for the Hessian of \(\hat{\mathcal{L}}_{\text{AG}}\left(\thetamodel,\beta\right) \) can be obtained; this is discussed further in Section~\ref{sec:optimisation}, which focuses on the resulting optimisation problem.


\section{Implementation Details} \label{sec:implmentation}
	In this section, several key points regarding implementation details and the resulting optimisation problems are considered. Section~\ref{sec:initiliastion} examines the initialisation of the resultant optimisation problems, and Section~\ref{sec:optimisation} provides some details regarding the optimisations required.
 
	\subsection{Initialisation} \label{sec:initiliastion}
		Similar to all identification approaches for nonlinear systems, an initial estimate of \(\theta\) must be provided and influences both the run-time required, and potentially, the parameter estimate produced. However, compared to EM-based approaches, the proposed approach also requires an initial estimate of each pairwise joint distribution. Similar to EM identification approaches, a smoother can provide this initialisation.
		
		Alternatively, other approaches to initialise the state distributions can be used. This initialisation does not need to satisfy the constraints and introduces an added level of flexibility to exploit. Generally, using the distributions from a filtering pass has proven both effective and straightforward. The initial distribution can also be selected using problem-specific knowledge; an example is when state estimates can be readily calculated from the measurements. This alternative initialisation is particularly beneficial for the additive noise specialisation (Section~\ref{sec:additive noise}), as it removes the requirement to provide initial estimates for \(\Pi\), which is not well known a priori.
 
		Due to the general nature of the optimisation problems, it is not possible to describe the `best' initialisation scheme or how to ensure that undesirable local minima are avoided.  As such, the numerical examples in Section~\ref{sec:examples} focus on robustness with respect to the initialisation of parameters and use all of the initialisation schemes for the state distributions discussed.
  
	\subsection{Optimisation} \label{sec:optimisation}
  		As previously stated, the resulting optimisation problems are of standard form and can be solved using standard solvers. In the numeric examples, the solvers \texttt{fmincon} \cite{MATLAB2018} and \textsc{Knitro} \cite{Byrd2006} are both used. 

		To perform the optimisation effectively, the exact first- and second-order derivatives are used for the general form of the proposed approach. Due to the assumed Gaussian distributions, closed-form expressions, and first- and second-order derivatives exist for all terms, except for \(\hat{I}_{2}\left(\beta,\theta\right)\). Due to the parametrisation of the optimisation problem, the exact gradient and Hessian of  \(\hat{I}_{2}\left(\beta,\theta\right)\) can be efficiently obtained using automatic differentiation \cite{Griewank2008}; we have used CasADi \cite{Andersson2018} for this purpose. 
  
		To further structure the optimisation problem a copy of \(\theta\) for each time step, denoted \(\theta_k\), and the constraints \(\theta_1 = \theta_2 =, \dots, = \theta_T\), are introduced to result in a sparse block-diagonal Hessian. This structure allows the Hessian to be both efficiently formed and used within optimisation routines with a computational complexity that grows linearly in data length, i.e. \(\mathcal{O}\left(T\right)\).
  
		For the additive Gaussian noise specialisation, the derivatives for \( \hat{I}_{2}^{\text{AG}}(\phi, \beta)\) are calculated using Lemma~\ref{der:lem:logdet hessian}.
		\begin{lemma} \label{der:lem:logdet hessian}
  			The gradient and Hessian of \(\hat{I}_{2}^\text{AG}\left(\thetamodel,\beta\right)\) with respect to \(x\) are given by
			\begin{subequations}
		  		\begin{align}
		  			-\nabla_x \hat{I}_{2}^\text{AG}\left(\thetamodel,\beta\right) &= T J_n^\Transp \textnormal{vec}\left( \bar{\Pi}^{-\frac{\Transp}{2}} e(x)\right), \\
		  			-\nabla^2_{x,x} \hat{I}_{2}^\text{AG}\left(\thetamodel,\beta\right) &= T J_n^\Transp J_n - \frac{T}{2} V^\Transp V + S,
		  		\end{align}
			  	where
			  	\begin{align}
			  		J_n &= \left(I_{T n_s} \otimes \bar{\Pi}^{-\frac{\Transp}{2}}\right)  \frac{\partial \textnormal{vec}\left(e\left(x\right)\right)}{\partial x^\Transp }, \\
			  		V &= \tilde{V} + P_{n_x+n_y}\tilde{V}, \\
			  		\tilde{V} &= \left( \bar{\Pi}^{-\frac{\Transp}{2}}e\left(x\right) \otimes I_{n_x+n_y} \right) J_n, \\
			  		\bar{\Pi} &= e\left(x\right) e\left(x\right)^\Transp, \\
			  		e\left(x\right) &= [ e\left(x\right)_1, \dots, e\left(x\right)_T ], \\
			  		e\left(x\right)_k &= [ \sqrt{w_1}\xi^1_k, \dots, \sqrt{w_{n_s}} \xi^{n_s}_k ].			
			  	\end{align}
		   	\end{subequations}
			The matrix \( P_{n_x+n_y} \) is a square vec-permutation matrix \citep{Henderson1979}, \(S\) contains second-order derivative terms, \(x\) is the variables being optimised, \(J_n^\Transp J_n\) is block diagonal, \(V^\Transp V\) is dense, \(V \in \mathcal{R}^{ (n_x+n_y)^2 \times T n_b}\), \(n_b\) is the dimension of the diagonal blocks, and, as \(e(x)\) is a linear function of the sigma points, both \(J_n\) and \(V\) can be obtained using only the Jacobian of \(f(x_k, \thetamodel)\) and \(h(x_k, \thetamodel)\).
		\end{lemma}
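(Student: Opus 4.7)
My plan is to derive both formulas by direct matrix calculus on $\tfrac{T}{2}\log|\bar{\Pi}(x)|$, which equals $\hat{I}_2^{\text{AG}}$ up to an $x$-independent constant (since $\hat{\Pi}=\bar{\Pi}/T$ and $\bar{\Pi}(x)=e(x)e(x)^{\Transp}$). The key enabling structural fact is that each sigma point is linear in the entries of $\beta_k$ and enters only through $f(\cdot,\thetamodel),\,h(\cdot,\thetamodel)$, so $\partial\textnormal{vec}(e)/\partial x^{\Transp}$ is expressible via the Jacobians of $f$ and $h$ alone and is, moreover, block diagonal across time, which is what ultimately gives the claimed sparsity of $J_n^{\Transp}J_n$.

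For the gradient, I apply $d\log|\bar{\Pi}|=\textnormal{tr}(\bar{\Pi}^{-1}\,d\bar{\Pi})$ with $d\bar{\Pi}=(de)e^{\Transp}+e(de)^{\Transp}$ together with cyclic/symmetry of trace to get $d\log|\bar{\Pi}|=2\,\textnormal{tr}(e^{\Transp}\bar{\Pi}^{-1}\,de)=2\,\textnormal{vec}(\bar{\Pi}^{-1}e)^{\Transp}\textnormal{vec}(de)$. Splitting $\bar{\Pi}^{-1}=\bar{\Pi}^{-1/2}\bar{\Pi}^{-\Transp/2}$ through a symmetric square root and pulling one copy through via $\textnormal{vec}(AB)=(I\otimes A)\textnormal{vec}(B)$ lets me rewrite this as $2\,\textnormal{vec}(\bar{\Pi}^{-\Transp/2}e)^{\Transp}(I_{Tn_s}\otimes\bar{\Pi}^{-\Transp/2})\textnormal{vec}(de)$. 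Chain-ruling through $\partial\textnormal{vec}(e)/\partial x^{\Transp}$ then produces the row form $2\,\textnormal{vec}(\bar{\Pi}^{-\Transp/2}e)^{\Transp}J_n$; transposing and multiplying by the $T/2$ prefactor (with the overall sign dictated by the paper's convention for $\hat{I}_2^{\text{AG}}$) yields exactly the stated formula $-\nabla_x \hat{I}_2^{\text{AG}} = T J_n^{\Transp}\textnormal{vec}(\bar{\Pi}^{-\Transp/2}e)$.

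For the Hessian I differentiate this gradient once more in $x$ and split the result into three contributions. (i) Re-differentiating the outer $J_n^{\Transp}$, i.e.\ $\partial\textnormal{vec}(e)/\partial x^{\Transp}$, produces the $S$ term, which carries the second-order derivatives of $f$ and $h$. (ii) Freezing $\bar{\Pi}^{-\Transp/2}$ and differentiating the $e$ inside $\textnormal{vec}(\bar{\Pi}^{-\Transp/2}e)$ regenerates exactly $J_n$, so the outer factor combines with it to give the $T J_n^{\Transp}J_n$ piece. (iii) Freezing $e$ and differentiating $\bar{\Pi}^{-\Transp/2}$ uses $d\bar{\Pi}^{-1}=-\bar{\Pi}^{-1}(d\bar{\Pi})\bar{\Pi}^{-1}$ combined with the two-term $d\bar{\Pi}=(de)e^{\Transp}+e(de)^{\Transp}$; vectorising each piece via $\textnormal{vec}(ABC)=(C^{\Transp}\otimes A)\textnormal{vec}(B)$ yields $\tilde{V}$ from the $(de)e^{\Transp}$ piece and, after the transpose is absorbed via $\textnormal{vec}(M^{\Transp})=P_{n_x+n_y}\textnormal{vec}(M)$, $P_{n_x+n_y}\tilde{V}$ from the $e(de)^{\Transp}$ piece, so their sum is $V=\tilde{V}+P_{n_x+n_y}\tilde{V}$ and the combined contribution to $-\nabla^2_{x,x}\hat{I}_2^{\text{AG}}$ is $-\tfrac{T}{2}V^{\Transp}V$.

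The main obstacle will be the bookkeeping in step (iii): tracking the commutation matrix $P_{n_x+n_y}$ carefully enough that the raw cross terms (two from each $d\bar{\Pi}$ piece pairing with the differential appearing in the gradient itself) collapse into the clean quadratic $V^{\Transp}V$ rather than a four-term expression, and verifying that the single middle factor $\bar{\Pi}^{-1}$ sandwiched between the two $d\bar{\Pi}$ pieces splits symmetrically as $\bar{\Pi}^{-\Transp/2}\bar{\Pi}^{-1/2}$ so that one copy is absorbed into $\tilde{V}$ and the other into $\textnormal{vec}(\bar{\Pi}^{-\Transp/2}e)$. Once that is in place, the structural remarks follow immediately: $J_n^{\Transp}J_n$ inherits the block-diagonal-in-time structure of $\partial\textnormal{vec}(e)/\partial x^{\Transp}$, whereas $V^{\Transp}V$ is dense because the whitening matrix $\bar{\Pi}^{-\Transp/2}$ couples the columns of $e$ across all time indices.
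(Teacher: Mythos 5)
Your proposal is correct and follows essentially the same route the paper gestures at: Magnus-style matrix differential calculus on $\tfrac{T}{2}\log|\bar{\Pi}(x)|$ with $d\log|\bar{\Pi}|=\trace(\bar{\Pi}^{-1}d\bar{\Pi})$, $d\bar{\Pi}=(de)e^{\Transp}+e(de)^{\Transp}$, Kronecker-product/vec identities and the vec-permutation (commutation) matrix, which the paper's proof simply cites as ``extensive algebraic manipulations'' without carrying out. Your decomposition of the Hessian into the $S$, $TJ_n^{\Transp}J_n$ and $-\tfrac{T}{2}V^{\Transp}V$ pieces checks out (the two cross terms $\tilde{V}^{\Transp}\tilde{V}$ and $\tilde{V}^{\Transp}P_{n_x+n_y}\tilde{V}$, with $P_{n_x+n_y}$ symmetric and orthogonal, recombine exactly into $\tfrac12 V^{\Transp}V$), so you are in effect supplying the details the paper omits.
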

		\begin{proof}
  			These equations can be verified by applying the approach to matrix calculus given in \cite{Magnus2019}, properties of Kronecker products, and extensive algebraic manipulations.
		\end{proof}
		The Hessian of \( -\hat{I}_{2}^{\text{AG}}(\phi, \beta)\) is approximated as 
		\begin{align}
			H_{2} &\approx T J_n^\Transp J_n - \frac{T}{2} V^\Transp V.
		\end{align}
		For large \(T\), the dense term renders both forming and factorising \(H_{2}\) computationally intractable. To avoid this issue, we have developed a custom trust-region optimisation routine that, by exploiting the structure of the Hessian approximation, performs an equivalent factorisation with a computational complexity linear in time with respect to the number of measurements. The developed routine is based on the approaches in \cite{Nocedal2006a,Conn2000,More1983}; the full details are beyond the scope of this paper.
	

\section{Examples} \label{sec:examples}

	In this section, we present three numerical examples. First, a stochastic volatility model (Section~\ref{sec:bitcoin_example}) that does not possess additive Gaussian noise. Second, a simulated example of a differential drive robot (Section~\ref{sec:robot_example}) and third, an inverted pendulum using real data (Section~\ref{sec:pendulum_example}). These latter two examples have additive Gaussian noise and structures arising from first-principles modelling, representative of practical applications. The pendulum example differs in that it compares with a particle, rather than an assumed Gaussian method, and that the system is unstable, a property that can be challenging \citep{Ribeiro2020}. 

	For all examples, unless specified otherwise, the proposed approach refers to the general form given by \eqref{eq:approx_optim}. A third-order unscented transform \cite{Julier1997,Wan2000} has been used for all Gaussian quadrature approximations. All numerical examples were conducted on a laptop with an i7\nobreakdash-7820HK processor and 32GB of memory.
 

\subsection{Stochastic Volatility Model} \label{sec:bitcoin_example}
	In this example, we consider the estimation of \(\theta = [a, b, c]\) for the following stochastic volatility model
	\begin{align}
		x_{k+1} &= a + b x_k + \sqrt{c} \pn_k, \qquad 
		y_k  = \sqrt{e^{x_k}}\mn_k,
	\end{align}
	where $\pn_k \sim \mathcal{N}(0,1)$ and $\mn_k \sim \mathcal{N}(0,1)$ is considered using 726 simulated measurements. Results obtained using stochastic approximation EM (SAEM) with a conditional particle filter (CPF) \cite{Lindsten2013}, referred to as CPF-SAEM, which is an asymptotically convergent method, are used to compare against.

	For the proposed method, each joint state distribution is initialised with a mean of $[2, 2]^\Transp$ and a diagonal covariance with standard deviations of \num{0.1}. An initial parameter estimate of $ \theta = [0, 0.5, 1]^\Transp $ is used for both approaches, and 50 particles were used for the CPF-SAEM approach. 

	The proposed method converged in 19 iterations and \SI{3.18}{\second} using \texttt{fmincon} \cite{MATLAB2018} to perform the optimisation. In contrast, CPF-SAEM did not converge and is limited to \num{1000} iterations, which required \SI{326}{\second}  to complete. Fig.~\ref{fig:bitcoin_VI_vs_SAEM} shows the parameter trajectory of both methods and the true values. This result highlights that both methods provide similar estimates that are close to the true parameter values. CPF-SAEM, however, requires a significantly larger quantity of iterations and has only approached the parameter values obtained using the proposed method.
	\begin{figure}[!t]
		\centering
		\includegraphics{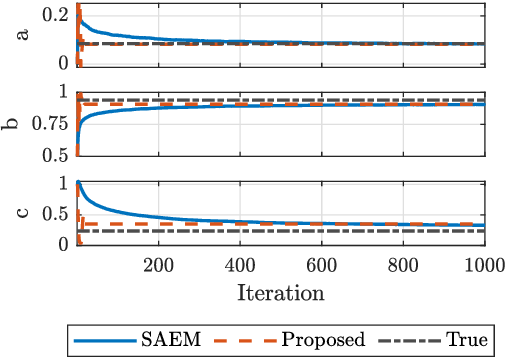}
		\caption{Parameter trajectory vs.~iteration count. The lines for the proposed method have been extended beyond 19 iterations for clarity.}
		\label{fig:bitcoin_VI_vs_SAEM}
	\end{figure}

	To examine if, at least for this example, the parameter estimates produced using the developed method converge as expected for a maximum likelihood method, we have additionally estimated \(\theta\) for \num{50} different data realisations for differing numbers of simulated measurements. Table~\ref{tab:sv_different_lengths} shows the results of this experiment and illustrates an asymptotic trend of decreasing bias and variance of the estimated parameters as the number of measurement samples increases.
	\begin{table}[!t]
		\centering
		\caption{Mean and standard deviation of the estimated parameters from the true values over 50 different realisations for differing numbers of measurement samples.}\label{tab:sv_different_lengths}
		\begin{tabular}{l S[table-format=.2,round-mode=places,round-precision=2] S[table-format=.2,round-mode=places,round-precision=2] S[scientific-notation = true,round-mode=places,round-precision=2]}
			\toprule
			                                 &                                           \multicolumn{3}{c}{Error of the Estimated Parameter}                                           \\
			\cmidrule(lr){2-4}  
			Samples 						 & \multicolumn{1}{c}{\(a\) (true = \num{0.0848} )} & \multicolumn{1}{c}{\(b\) (true = 0.9393)} & \multicolumn{1}{c}{\(c\) (true = 0.2369)} \\ \midrule
			\num{100}                        & \(\num{3.1893e-01}\pm\num{5.9769e-01}\)          & \(\num{-3.8018e-01}\pm\num{4.2774e-01}\)  & \(\num{4.7631e-01}\pm\num{6.5063e-01}\)   \\
			\num{500}                        & \(\num{2.2835e-02}\pm\num{5.0719e-02}\)          & \(\num{-1.9332e-02}\pm\num{2.7540e-02}\)  & \(\num{4.8509e-02}\pm\num{8.1055e-02}\)   \\
			\num{1000}                       & \(\num{2.3140e-02}\pm\num{4.1241e-02}\)          & \(\num{-1.3968e-02}\pm\num{2.2599e-02}\)  & \(\num{3.7880e-02}\pm\num{6.0960e-02}\)   \\
			\num{5000}                       & \(\num{8.6312e-03}\pm\num{1.2837e-02}\)          & \(\num{-6.7006e-03}\pm\num{7.6825e-03}\)  & \(\num{3.0516e-02}\pm\num{2.3570e-02}\)   \\
			\num{10000}                      & \(\num{7.5905e-03}\pm\num{8.6623e-03}\)          & \(\num{-5.3403e-03}\pm\num{5.0392e-03}\)  & \(\num{2.8634e-02}\pm\num{1.7444e-02}\)   \\
			\num{25000}                      & \(\num{5.6510e-03}\pm\num{5.4514e-03}\)          & \(\num{-4.8744e-03}\pm\num{3.2918e-03}\)  & \(\num{2.4962e-02}\pm\num{1.0994e-02}\)   \\ \bottomrule
		\end{tabular}
	\end{table}

	The robustness to initial estimates of the proposed method has also been examined using 100 random initial parameter estimates sampled from
	\begin{align*}
		a \sim \uniformnormal{-0.5}{0.5}, \quad b \sim \uniformnormal{0}{1.5}, \quad c \sim \uniformnormal{0.25}{2}.
	\end{align*}
	Parameter trajectories for each trial are plotted in Fig.~\ref{fig:bitcoin_real_many_inits} and shows that each initialisation has converged to the same parameter estimate.
	\begin{figure}[!t]
		\centering
		\includegraphics{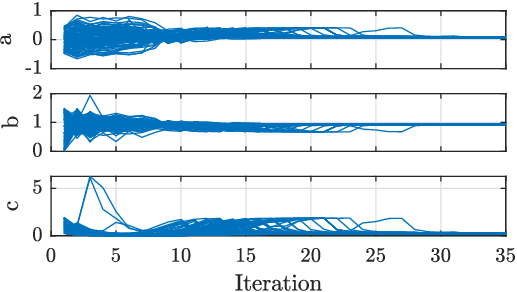}
		\caption{Parameter trajectory vs.~iteration count for 100 random initial estimates using the proposed method.}
		\label{fig:bitcoin_real_many_inits}
	\end{figure}

	In this section, the proposed system identification approach is applied to a system without additive Gaussian noise and compared favourably with CPF-SAEM. The lack of additive Gaussian noise required no alterations to the proposed approach, which proved to be both effective and robust to initial parameter estimates.
 

\subsection{Differential Drive Robot} \label{sec:robot_example}
	In this example, we consider a continuous-time model of a differential drive robot given by
	\begin{align*}
		\!\!
		\begin{bmatrix}
			\dot{q}_1(t) \\
			\dot{q}_2(t) \\
			\dot{q}_3(t) \\
			\dot{p}_1(t) \\
			\dot{p}_2(t)
		\end{bmatrix} \!\!=\!\! \begin{bmatrix}
									\frac{\cos\left( q_3(t)\right) p_1(t)}{m} \\
									\frac{\sin\left( q_3(t)\right) p_1(t)}{m} \\
									\frac{p_2(t)}{J+ml^2} \\
									\frac{-r_1 p_1(t)}{m} - \frac{ml p_2^2(t)}{\left(J+ml^2\right)^2} + u_1(t) + u_2(t) \\
									\frac{\left(lp_1(t)-r_2\right)p_2(t)}{J+ml^2} + a u_1(t) - a u_2(t)
								\end{bmatrix},
	\end{align*}
	where \(r_1 = 1\), \(r_2 = 1\), \(a = 0.5\), \(m = 5\), \(J = 0.2\), \(l = 0.15\), \(u_1(t)\) is the force applied to the left wheel, \(u_2(t)\) is the force applied to the right wheel, and the state vector \(x(t) = \left[{q}_1(t), {q}_2(t), {q}_3(t), {p}_1(t), {p}_2(t)\right]^\Transp\) consists of x-position, y-position, heading, linear momentum, and angular momentum states, respectively. A \SI{50}{\second} simulated trajectory is generated using an ODE solver disturbed by noise sampled from \( \mathcal{N}\left(0,Q\right) \) where \(Q = \text{diag}\left(\num{0.001}, \num{0.001}, \num{1.745e-3}, \num{0.001}, \num{0.001}\right)\) at \SI{0.1}{\second} intervals. Measurements at each interval are obtained according to
	\begin{align*}
		y_k &= \left[{q}_1(t), {q}_2(t), {q}_3(t)\right]^\Transp + \mn_k, \quad  \mn_k \sim \mathcal{N}\left(0,R\right),
	\end{align*} 
	where \( R = \text{diag}\left(0.1^2, 0.1^2, 0.0349^2\right)\). As the system identification methods considered in this paper are all for discrete-time models, a Euler discretisation of the continuous-time dynamics over each \SI{0.1}{\second} interval was used to obtain a model in the form of \eqref{eq:NLmodel}.
	
	\subsubsection{Convergence}
		This section examines the convergence when estimating \(m\), \(l\), \(r\), and a decoupled additive Gaussian covariance. For the proposed method, \textsc{Knitro} \cite{Byrd2006} is used to perform the optimisation. The results are compared against two assumed Gaussian EM approaches that approximate the smoothing step using the URTSS and the VI smoother in \cite{Courts2020}; these are denoted URTSS-EM and VI-EM, respectively and differ in how the assumed Gaussian smoothed density is obtained. From Section~\ref{sec:link_to_EM}, the VI-EM corresponds to variational EM. As such, VI-EM is expected to converge towards the same values as the proposed method, albeit at a slower rate. For each method an initial parameter estimate of \(m = 10\), \(J = 4\), \(l = 0.3\), and
		\begin{align*}
			\Pi = \text{diag}\big( \num{0.01}^2, \num{0.01}^2, \num{0.0035}^2, \num{0.01}^2, \num{0.01}^2, \num{0.1}^2, \num{0.1}^2, \num{0.0349}^2 \big),
		\end{align*}
		is used. For the proposed method, the state distributions have been initialised using the filter developed in \cite{Courts2020}.

		The proposed method converged to a locally optimal solution in \num{194} iterations in \SI{297}{\second}; neither the URTSS EM nor the VI EM approaches converged within the \num{20e3} iteration limit, which required \SI{1.07}{\hour} and \SI{5.81}{\hour}, respectively, to reach. Fig.~\ref{fig:robot_ID_comparision} shows the cost obtained using each method as a function of iteration count and illustrates the \num{194} iterations of the proposed method outperformed the \num{20e3} iterations of both the EM approaches. As expected, VI-EM asymptotically approaches the cost achieved by the proposed method. Contrarily, URTSS-EM does not approach this cost; neither does it maintain the desired monotonic behaviour of EM. This highlights the benefit of following the more principled approach regarding any approximations introduced.
		\begin{figure}[!t]
			\centering
			\includegraphics{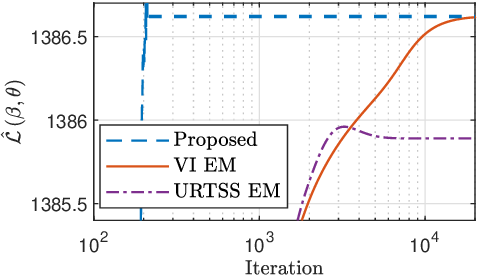}
			\caption{Comparison of proposed, VI-EM, and URTSS-EM approaches to identification on a simulated robot example. Proposed extended past the \num{194} iterations required for convergence for clarity.}
			\label{fig:robot_ID_comparision}
		\end{figure}

	\subsubsection{Robustness to Initialisation}
		This section examines the robustness of the proposed method to the initial parameter estimate. For this, estimation of \(m\), \(l\), \(r\), and coupled noise term \(\Pi\) is considered from \num{75} differing parameter initialisations given by
		\begin{gather*}
			m \sim \uniformnormal{0.5}{15}, \quad J \sim \uniformnormal{0.01}{10}, \quad l \sim \uniformnormal{0.01}{0.5}, \\
			\Pi = \text{diag}\big( \num{0.1}^2, \num{0.1}^2, \num{0.0349}^2, \num{0.1}^2, \num{0.1}^2, \num{0.5}^2, \num{0.5}^2, \num{0.1745}^2 \big).
		\end{gather*}
		The initial state distributions for each optimisation were obtained by running the URTSS from the initial parameter estimates.

		Table~\ref{tab:robot_robustness} shows a subset of the estimated parameters for this experiment, where \(\Pi_{ii}\) indicates the $i^{\textnormal{th}}$ diagonal element of \(\Pi\). In the third column, the mean parameter estimate for each of the \num{75} trials is shown and is an effective estimate of the true parameters. The fourth column shows the maximum difference between this mean and each trial and indicates that each of the \num{75} trials converged to the same parameter estimate, subject to a small tolerance.

		These results indicate the robustness of the proposed approach and highlights that, at least for this example, the proposed method does not suffer from a potentially large quantity of undesirable local minima. The estimates for the off-diagonal elements of \(\Pi\) performed similarly as the parameters shown in Table~\ref{tab:robot_robustness}. Due to the multi-variable nature of \(\Pi\), these numerical values are excluded for brevity.
		\begin{table}[!t]
			\centering
			\caption{Parameter estimate for 75 differing initial parameter estimates to illustrate robustness.}\label{tab:robot_robustness}
			\begin{tabular}{l S[table-format=.2,round-mode=places,round-precision=2] S[table-format=.2,round-mode=places,round-precision=2] S[scientific-notation = true,round-mode=places,round-precision=2]}
				\toprule
				Parameter    & \multicolumn{1}{l}{True} & \multicolumn{1}{l}{Estimated} & \multicolumn{1}{l}{Maximum difference} \\ \midrule
				\(m\)        & \num{5}                  & \num{5.0727}                  & \num{6.1518e-9}                        \\
				\(J\)        & \num{2}                  & \num{2.2353}                  & \num{1.2048e-8}                        \\
				\(l\)        & \num{0.15}               & \num{0.1435}                  & \num{6.6374e-10}                       \\
				\(\Pi_{11}\) & \num{1.00e-3}            & \num{1.1154e-3}               & \num{5.7510e-9}                        \\
				\(\Pi_{22}\) & \num{1.00e-3}            & \num{1.4088e-3}               & \num{3.0441e-8}                        \\
				\(\Pi_{33}\) & \num{1.7435e-3}          & \num{1.8535e-3}               & \num{3.7630e-9}                        \\
				\(\Pi_{44}\) & \num{1.00e-3}            & \num{4.8965e-3}               & \num{1.5193e-7}                        \\
				\(\Pi_{55}\) & \num{1.00e-3}            & \num{1.4029e-3}               & \num{1.1498e-8}                        \\
				\(\Pi_{66}\) & \num{1.00e-2}            & \num{1.0022e-2}               & \num{4.6894e-7}                        \\
				\(\Pi_{77}\) & \num{1.00e-2}            & \num{8.2476e-3}               & \num{4.1427e-7}                        \\
				\(\Pi_{88}\) & \num{1.2185e-3}          & \num{1.3841e-3}               & \num{1.3314e-7}                        \\ \bottomrule
			\end{tabular}
		\end{table}
 

\subsection{Inverted Pendulum} \label{sec:pendulum_example}
	This section considers a rotational inverted pendulum, or Furata pendulum \cite{Furuta1992}, using data collected from a Quanser QUBE-Servo 2. Letting the state vector used to model the Furata pendulum be $x = \pmat{\vartheta & \alpha & \dot{\vartheta} & \dot{\alpha}}^{\Transp}$, where $\vartheta$ and $\alpha$ are the base arm and pendulum angles, respectively, and the controllable input to the system is the motor voltage $V_{m}$.  Then, the continuous time dynamics are then given by
	\small
	\begin{equation*}\label{eq:pend_process}
		\begin{split}
			&M(\alpha)\pmat{\ddot{\vartheta} \\ \ddot{\alpha}} + \nu(\dot{\vartheta},\dot{\alpha})\pmat{\dot{\vartheta} \\ \dot{\alpha}} = \pmat{\frac{k_m(V_m - k_m\dot{\vartheta})}{R_m} - D_r \dot{\vartheta} \\ -\frac{1}{2}m_pL_pg\sin(\alpha)-D_p\dot{\alpha}},\\
			&M(\alpha) \hspace{-0.75mm}=\hspace{-0.75mm} \pmat{m_pL_r^2\hspace{-0.5mm}+\hspace{-0.5mm}\frac{1}{4}m_pL_p^2(1\hspace{-0.5mm}-\hspace{-0.5mm}\cos(\alpha)^2)\hspace{-0.5mm}+\hspace{-0.5mm}J_r & \hspace{-2mm} \frac{1}{2}m_pL_pL_r\cos(\alpha) \\ \frac{1}{2}m_pL_pL_r\cos(\alpha) & J_p + \frac{1}{4}m_pL_p^2}, \\
			&\nu(\dot{\vartheta},\dot{\alpha}) \hspace{-0.75mm}=\hspace{-0.75mm} \pmat{\frac{1}{2}m_pL_p^2\sin(\alpha)\cos(\alpha)\dot{\alpha} & \hspace{-2mm} - \frac{1}{2}m_pL_pL_r\sin(\alpha)\dot{\alpha} \\ - \frac{1}{4}m_pL_p^2\cos(\alpha)\sin(\alpha)\dot{\vartheta} & 0}, \\
		\end{split}
	\end{equation*}
	\normalsize 
	where $m_p$ is the pendulum mass, $L_r$, $L_p$ and the rod and pendulum lengths, $J_r$, $J_p$ are the rod and pendulum inertias, $R_m$ and $k_m$ are the motor resistance and constant, $D_p$ and $D_r$ are the pendulum and arm damping constants. The considered process model is a two-step Euler discretisation of these continuous-time dynamics over an \SI{8}{\milli\second} sampling time subsequently disturbed by noise \(\pn_k\). The available measurements are from encoders on the rod and pendulum angle, and the motor current. The resulting measurement model is
	\begin{align*}
		y_k &= 	\begin{bmatrix}
					\vartheta & \alpha & \frac{V_m - k_m \dot{\vartheta}}{R_m} 
				\end{bmatrix}^\Transp + \mn_k, 
				\qquad 
				\begin{bmatrix}
					\pn_k^\Transp & \mn_k^\Transp
				\end{bmatrix}^\Transp \sim \mathcal{N}\left(0,\Pi\right),
	\end{align*}
	which, together with the discrete process model obtained using the two-step Euler integration, results in a model in the form of \eqref{eq:NLmodel}.
  	We are interested in estimating the parameters $\theta=\left(D_r, D_p, J_r, J_p, k_m, R_m, \Pi\right)$. 
 
	This section compares the effectiveness and robustness of the additive noise specialisation of the proposed method with CPF-SAEM from \num{50} differing parameter initialisations. These initialisations sampled from
	\begin{align*}
		&J_r \sim \uniformnormal{\num{1e-7}}{0.01}, \quad &&D_r \sim \uniformnormal{\num{1e-7}}{0.01}, \\
		&J_p \sim \uniformnormal{\num{1e-7}}{0.01}, \quad &&D_p \sim \uniformnormal{\num{1e-7}}{0.01}, \\
		&k_m \sim \uniformnormal{0.001}{1}, 		\quad &&R_m\sim \uniformnormal{4}{15},
	\end{align*}
	with the noise covariance term initialised as 
	\begin{align*}
			\Pi = \text{diag}\Big( \num{7.6154e-5}, \num{7.6154e-5}, \num{0.0012}, \num{0.0012}, \num{3.0462e-4}, \num{3.0462e-4}, \num{0.01} \Big),
	\end{align*}
	on a single data set consisting of \num{375} measurements, during which the pendulum undergoes full rotations. Note that the noise terms are only utilised for the CPF-SAEM approach; in accordance with Section~\ref{sec:initiliastion} they are not required for the additive noise version of the proposed approach.

	For these trials, the initial position and velocity states used the respective measurements and a finite-difference approximation for the means, and standard deviations of \ang{1} and \SI{10}{\degree\per\second}, respectively. For the proposed method, a relative function tolerance of \num{1e-8} on \(\hat{\mathcal{L}}_{\text{AG}}\left(\thetamodel,\beta \right)\) was the termination condition, which required a median of \num{170} iterations and \SI{112}{\second} to achieve. Importantly, the proposed method proved to be robust to the differing initialisations, with all trials converging to the same value subject to a small numeric tolerance.    

	The CPF-SAEM approach was performed using a fully adapted auxiliary particle filter \cite{Pitt1999} with \num{100} particles, \num{50} particles for the backward simulation smoother \cite{Godsill2004}, and a burn-in of \num{100} iterations before applying the stochastic approximation. The iterations terminated when the change in all parameter estimates between successive iterations fell below \num{1e-3} for five of the last ten iterations, which required a median of \num{156} iterations and \SI{308}{\second} to achieve. In contrast to the proposed method, CPF-SAEM failed on \num{12} of the \num{50} trials.

	Table~\ref{tab:pendulum_robustness} shows the mean and standard deviation of a subset of the estimated parameter values of each successful trial, where \(\Pi_{ii}\) indicates the $i^{\textnormal{th}}$ diagonal element of \(\Pi\). The closeness of the estimates for the proposed method to the successful CPF-SAEM trials illustrates that, despite the approximations introduced, similar parameter estimates are obtained. While, for brevity, the numeric values of off-diagonal elements of \(\Pi\) are omitted here, they were similarity estimated.
	\begin{table}[!t]
		\caption{Mean and standard deviation of the parameter estimates for each successful trial from 50 differing initial values. Twelve of the CPF-SAEM runs were unsuccessful and have been censored.}
		\label{tab:pendulum_robustness}
		\centering
		\begin{tabular}{l S[scientific-notation = true,round-mode=places,round-precision=2] S[scientific-notation = true,round-mode=places,round-precision=2]}
			\toprule
			Parameter    & \multicolumn{1}{c}{Proposed}           & \multicolumn{1}{c}{CPF-SAEM}          \\ \midrule
			\(J_r\)      & \(\num{1.0745e-4}\pm\num{8.1713e-11}\) & \(\num{1.0842e-4}\pm\num{3.2734e-7}\) \\
			\(J_p\)      & \(\num{2.9242e-5}\pm\num{3.3197e-11}\) & \(\num{2.9002e-5}\pm\num{7.7968e-8}\) \\
			\(k_m\)      & \(\num{0.0457}\pm\num{2.7279e-8}\)     & \(\num{0.0499}\pm\num{4.6046e-4}\)    \\
			\(R_m\)      & \(\num{9.5862}\pm\num{4.0597e-7}\)     & \(\num{10.2400}\pm\num{0.0742}\)      \\
			\(D_p\)      & \(\num{4.6821e-5}\pm\num{7.6643e-11}\) & \(\num{5.0327e-5}\pm\num{7.2373e-7}\) \\
			\(D_r\)      & \(\num{2.8108e-4}\pm\num{1.8684e-9}\)  & \(\num{2.1514e-4}\pm\num{4.3191e-6}\) \\
			\(\Pi_{11}\) & \(\num{3.9431e-6}\pm\num{1.5805e-10}\) & \(\num{6.1219e-6}\pm\num{3.4102e-7}\) \\
			\(\Pi_{22}\) & \(\num{1.9859e-6}\pm\num{2.1645e-10}\) & \(\num{4.6300e-6}\pm\num{2.9801e-7}\) \\
			\(\Pi_{33}\) & \(\num{2.7232e-2}\pm\num{2.7143e-7}\)  & \(\num{2.5760e-2}\pm\num{1.0993e-3}\) \\
			\(\Pi_{44}\) & \(\num{3.0969e-2}\pm\num{2.1672e-7}\)  & \(\num{3.0787e-2}\pm\num{1.6126e-3}\) \\
			\(\Pi_{55}\) & \(\num{2.4198e-6}\pm\num{5.2118e-10}\) & \(\num{1.3145e-6}\pm\num{9.4193e-8}\) \\
			\(\Pi_{66}\) & \(\num{1.7495e-6}\pm\num{2.0266e-10}\) & \(\num{1.2776e-6}\pm\num{6.6770e-8}\) \\
			\(\Pi_{77}\) & \(\num{1.9067e-2}\pm\num{1.0211e-9}\)  & \(\num{1.9563e-2}\pm\num{1.1781e-4}\) \\ \bottomrule
		\end{tabular}
	\end{table} 

	This section has demonstrated the applicability of the developed system identification approach on a real unstable system of practical interest. In particular, this example has shown the robustness to initialisation, computational efficiency, and ability to provide parameter estimates close to those of particle methods.


\section{Conclusion} \label{sec:conclusion}

	The contribution of this paper is to present a VI based approach to system identification for nonlinear state-space models. The resulting system identification approach consists of a single, deterministic, optimisation problem. Due to the assumed density and constrained parametrisation chosen, this optimisation problem is of a standard form and is efficiently solved using readily available exact first- and second-order derivatives. A specialisation for systems with additive Gaussian noise was also presented. The proposed method has been numerically examined on a range of simulated and real examples to illustrate the robustness and effectiveness and is compared favourably to state-of-the-art alternatives.	

	\bibliographystyle{plain}       
	\bibliography{ID_paper}
	
	\appendix 	
 	

\section{Proof of Lemma~\ref{lem:beta1}} \label{app:proof of reduced vi lemma}
	\begin{proof}	
		The cost function \(\mathcal{L}\left((\alpha,\gamma), \theta\right)\) is given by
		\begin{align}
			\mathcal{L}\left((\alpha,\gamma), \theta\right) &= \int q_{\eta}(\xAll) \log \frac{ p_\theta(\xAll, \yAll)}{q_\eta(\xAll)} d\xAll \\
			&= \int q_\eta(\xAll) \log p_\theta(\xAll, \yAll) d\xAll \notag \\
			&\quad - \int q_\eta(\xAll) \log  q_\eta(\xAll) d\xAll. \label{eq: full_entropy_integral}
		\end{align}	
		Due to the Markovian nature of state-space models, conditional probability, and that \(q_{\eta}(\xAll)\) is a probability distribution, similar to \cite{Schoen2011,Chitralekha2009,Saerkkae2013,Vrettas2008,Courts2020}, we can write
		\begin{align}
			\int q_\eta(\xAll) \log p_\theta(\xAll, \yAll) d\xAll = I_1\left(\alpha\right) + I_{2}\left(\alpha,\theta\right).
		\end{align}
		This is as only the pairwise joint distributions are required in the calculation of the first integral of \eqref{eq: full_entropy_integral}, i.e. it is independent of \(\gamma\).
		
		As shown in \cite{Courts2020}, the optimal assumed density \(q_{\eta^\star}\left(\xAll\right)\) factors according to
		\begin{align} \label{eq:q_factorisation}
			q_{\eta^\star}\left(\xAll\right) = q_{\eta^\star}\left(x_1\right) \prod_{k=1}^{T} q_{\eta^\star}\left(x_{k+1} \mid x_k\right),
		\end{align}
		and that \(\gamma^\star = g( \alpha^\star )\), where \(g(\alpha)\) is a function (defined by the process detailed in Appendix~C of \citep{Courts2020}) that provides a \(\gamma\) such that the above factorisation is satisfied. As such, by substituting \(\gamma = g(\alpha) \), the second integral in \eqref{eq: full_entropy_integral} is given by
		\begin{align}
			\int q_\eta(\xAll) \log  q_\eta(\xAll) d\xAll = I_3\left(\alpha\right).
		\end{align}
		Therefore, we can equivalently find \(\theta^\star\) and \(\alpha^\star\) from either \eqref{eq:vi_red} or \eqref{eq:vi_full}.
	\end{proof}
 	
\end{document}